\numberwithin{equation}{section}
\def\e{\epsilon}
\def\E{\mathbb{E}}
\def\R{\mathbb{R}}
\def\cS{\mathcal{S}}
\def\cB{\mathcal{B}}
\newcommand{\cL}{\mathcal{L}}
\DeclareMathOperator{\var}{var}
\newtheorem{theorem}{Theorem}
\newtheorem{lemma}{Lemma}
\newtheorem{assumption}{Assumption}
\newtheorem{proposition}{Proposition}
\newtheorem{remark}{Remark}
\begin{document}

\title{On uniform-in-time diffusion approximation for stochastic gradient descent}
\date{}
\author[a,b,c]{Lei Li}
\author[a,b]{Yuliang Wang}
\affil[a]{School of Mathematical Sciences, Shanghai Jiao Tong University, Shanghai, 200240, P.R.China.}
\affil[b]{Institute of Natural Sciences, MOE-LSC, Shanghai Jiao Tong University, Shanghai, 200240, P.R.China.}
\affil[c]{Qing Yuan Research Institute, Shanghai Jiao Tong University, Shanghai, 200240, P.R.China.}
\maketitle

\begin{abstract}
The diffusion approximation of stochastic gradient descent (SGD) in current literature is only valid on a finite time interval. In this paper, we establish the uniform-in-time diffusion approximation of SGD,  by only assuming that the expected loss is strongly convex  and some other mild conditions, without assuming the convexity of each random loss function. The main technique is to establish the exponential decay rates of the derivatives of the solution to the backward Kolmogorov equation.
The uniform-in-time approximation allows us to study asymptotic behaviors of SGD via the continuous stochastic differential equation (SDE) even when the random objective function $f(\cdot;\xi)$ is not strongly convex.
\end{abstract}

\section{Introduction}\label{intro}

Stochastic gradient descent (SGD), as a stochastic approximation for the gradient descent, is a simple but powerful optimization method,  where the objective function is often the average of a family of functions. With the ``random mini-batch" idea, instead of directly calculating the sum of gradient of the whole family, SGD uses the sum over a small random set to approximate the big summation \cite{robbins1951stochastic,ross1988taguchi}. SGD is widely used for solving large scale data science problem, which has shown amazing performance for large-scale learning tasks due to its computational and statistical efficiency  \cite{bottou2010large,bubeck2015convex,bottou2016optimization}.
Recent decades have witnessed huge and fast progress in SGD-related research \cite{hulililiu2018,li2019stochastic,ankirchner2021approximating,smith2020generalization,smith2021origin}. Several variants of SGD have been proposed to deal with various tasks more efficiently, including combining with momentum and varying step size, etc \cite{daniel2016learning,zeiler2012adadelta,dauphin2015equilibrated}.

The optimization problem suited for SGD is given by $\min_{x\in \R^d} f(x)$, where 
\begin{gather}\label{eq:def_expected_loss}
f(x):=\E f(x; \xi)
\end{gather}  
is the loss/objective function associated with a certain training set, and $d$ is the dimension for the parameter $x$. 
Here, $\xi \sim \nu$ is a random variable/vector for some probability distribution $\nu$. Compared with $f(x)$,  $f(x; \xi)$ is often much easier to handle for each $\xi$. The SGD iteration with constant step size $\eta$ is then
\begin{gather}\label{eq:sgd}
X_{n+1}=X_n-\eta \nabla f(X_n; \xi_n) ,
\end{gather}
where $\xi_n\sim \nu$ are i.i.d.  so that $\xi_n$ is independent of $X_n$. Then, $\{X_n\}$ is a time homogeneous Markov chain. In practice, $\xi_n\sim \nu$ is often implemented by drawing random sets from training sets and using it to give the ``stochastic gradient''.  By \eqref{eq:def_expected_loss}, $\nabla f(x;\xi_n)$ is an unbiased estimation of true gradient $\nabla f$, namely, 
\[
\mathbb{E}\left[\nabla f(x;\xi)\right] = \nabla f(x),\quad \forall x \in \R^d.
\]
 Besides, the uncertainty introduced by the SGD is helpful for escaping sharp minimizers and for possible better generalization behavior \cite{li2019stochastic,lin2018don}.

As an example, consider training a deep neural network using $N\gg 1$ samples. The loss function is given by $f(x)=\frac{1}{N}\sum_{k=1}^N f_k(x)$. The back propagation algorithm is applied to compute $\nabla f_k(x)$, which is not trivial, making computing $\nabla f(x)$ expensive \cite{cao2009neural,li2016tutorial}. To handle this problem, we can pick a random set $\cB\subset \{1, \ldots, N\}$ with $|\cB|=m\ll N$. Then we identify $\cB$ with $\xi$ and let  $f(x; \xi)=\frac{1}{m}\sum_{k\in \cB}f_k(x)$. Computing the gradient of $f(x;\xi)$ is clearly much cheaper.

Now following the Markov property \cite{durrett1999essentials,durrett2019probability},  the function
\begin{gather}\label{eq:Un}
U^n(x)=\cS^n \varphi(x):=\mathbb{E}_x(\varphi(X_n))
\end{gather}
satisfies the equation \cite{hulililiu2018,feng2017}
\begin{gather}\label{eq:weakmaster}
U^{n+1}(x)=\cS U^n(x) :=\mathbb{E}(U^n(x-\eta \nabla f(x; \xi))).
\end{gather}
This means that $\{\cS^n\}$ is in fact a semigroup.
With the semigroup property, one is thus naturally motivated to approximate $U$ with solutions to some appropriate time continuous equation.

One classical method to approximate SGD is the diffusion approximation, and much work has been done by former researchers \cite{hulililiu2018,ankirchner2021approximating,feng2017,litaie2017,feng2019uniform} . Assuming that $f(\cdot,\xi)$ has bounded derivatives $\nabla f$, in any finite time interval, iterates of SGD  are close in the weak sense to the solution of following stochastic differential equation (SDE):
\begin{gather}\label{eq:classicalsde}
\mathrm{d} X_{t}=-\nabla\left[f(X)+\frac{1}{4} \eta |\nabla f(X) |^{2}\right] d t+\sqrt{\eta \Sigma}\, d W,
\end{gather}
where the matrix $\Sigma$ given by
\begin{gather}
\Sigma=\mathbb{E}_{\xi}\left[\left(\nabla f_{\xi}-\nabla f\right) \otimes\left(\nabla f_{\xi}-\nabla f\right)\right]
\end{gather}
is the covariance matrix of the random gradients, and $W$ is the standard Brownian motion in $\mathbb{R}^d$.
Note that the SDE \eqref{eq:classicalsde} approximates SGD in the weak sense with second order accuracy. If we instead use $dX=-\nabla f(X) dt+\sqrt{\eta\Sigma}\,dW$ for any smooth 
positive definite $\Sigma$, then the approximation has first order weak accuracy. This implies that the first order weak approximation only captures the coarse gradient descent feature, and loses much information, especially the fluctuation in the dynamics, and possibly the implicit bias \cite{smith2020generalization,smith2021origin}. Among those choices,  taking $\Sigma=\var(\nabla f(x,\xi))$ captures the most fluctuation in the corresponding SDE  \cite{hulililiu2018}.

The associated backward Kolmogrov equation to \eqref{eq:classicalsde} is given by
\begin{gather}\label{eq:backwardKol}
\frac{\partial u}{\partial t}=-\nabla f \cdot \nabla u+\eta\left(-\frac{1}{4} \nabla |\nabla f |^{2} \cdot \nabla u+\frac{1}{2} \operatorname{Tr}\left(\Sigma \nabla^{2} u\right)\right),
\end{gather}
and $u(x,t)$ with initial value $u(x,0)=\varphi(x)$ has a representation
\begin{gather}
u(x, t)=\mathbb{E}_x\varphi(X_t) ,
\end{gather}
where $\varphi$ is an arbitrary test function with certain regularity. In particular, we can take $\varphi = f$ to study the asymptotic oscillatory and how objective function converges to global minimum if the objective function admits some ``good" properties, like convexity.

  Other related works regarding diffusion approximation for SGD can be found in \cite{hulililiu2018,litaie2017}.
Many other approximation methods are proposed in \cite{ankirchner2021approximating}, where the authors also study approximation for SGD in some finite time iterval $[0,T]$. Their methods include ODE approximation, first order SDE approximation, and second order SDE approximation, with the change of step size taken into account in each method. Though numerous novel insights have been gained from this continuous perspective, it was previously still unclear whether the modified SDEs can really be adopted to study asymptotic behaviors of SGD, since the weak approximation is only valid in a finite time interval.  In \cite{feng2019uniform},  the authors used a truncated formal power expansion of the solution of a Kolmogrov equation arising from diffusion approximation to obtain uniform-in-time analysis. However, the diffusion approximation itself is still not uniformly valid. Besides, in practice, the boundedness assumption on $\nabla f$ is strong, which also motivates us to establish the uniform-in-time diffusion approximation for SGD with much weaker assumptions.

In this paper, instead of only considering the finite time horizon, we extend the classical idea of diffusion approximation to infinite time horizon, without assuming the boundedness of $\nabla f$. In our work, we study the traditional SGD with constant step size for general unbounded $\nabla f(\cdot, \xi)$, and show that SGD can be approximated in the weak sense by continuous-time SDEs in $\R^d$, by only assuming the strong convexity of the objective function $f$ and some other mild conditions. The SDE we use is different from \eqref{eq:classicalsde} in the sense that we have modified the  diffusion coefficient $\Sigma$.  Our approximation has second order weak accuracy and is uniform in time.  These will help us understand the discrete algorithms in the viewpoint of diffusion approximation and randomly perturbed dynamical system \cite{wu2018sgd}. With the diffusion approximation, it becomes possible that one is able to better understand the behavior of stochastic gradient noise in the SGD algorithm \cite{wu2021revisiting,simsekli2019tail}.

In particular, we study the long time behavior of $\left\{X_{n}\right\}_{n \geq 0}$ as $n$ approaches infinity in the flavor of backward error analysis of stochastic numerical schemes. After restricting $\Sigma$'s  support to some compact domain, where $\Sigma$ is the diffusion coefficient used in the classical method \eqref{eq:classicalsde}, we are then able to prove the following long time approximation:
\begin{gather}
\sup_{n\ge 0}\sup_{x\in B(0, R)}|U^n(x)-u(x, n\eta)|< C \eta^2,
\end{gather}
where $U$ is defined in \eqref{eq:Un} and $u$ is the solution to the backward Kolmogrov equation associated with the modified SDE. Compared with some previous work like \cite{feng2019uniform}, our result successfully weakens some of the assumptions, like the strong convexity of $f(\cdot;\xi)$ for every $\xi$ in the entire space $\R^d$, so that our result may be applied to more general objective functions.

The rest of the paper is organized as follows.  Before proving the main theorem, we establish some crucial auxiliary results in Section \ref{sec:setup}.
In Section \ref{sec:main}, we show that there is a uniform in time diffusion approximation for SGD with initial distribution on bounded sets by assuming strong convexity of the expected loss and some other mild regularity requirements. In Section \ref{sec:conclusion}, we perform discussion on the significance of long time diffusion approximation of SGD. We also discuss the case of general objective functions, for which the diffusion approximation on bounded set is valid up to $n\eta \sim O(\log(\eta^{-1}))$.

\section{Setup and auxiliary results}\label{sec:setup}

In this section, we first give some basic assumptions for our diffusion approximation, the most important of which is the strong convexity of the expected loss function. Next, we prove some auxiliary results useful for the diffusion approximation of SGD. In particular, Lemma \ref{lmm:Scontraction} ensures that SGD could not escape some compact set  after assuming some mild confinement conditions on the random loss functions; Lemma \ref{rmk3} ensures that the SDE solution we use to approximate SGD could not escape some compact set after modifying its diffusion coefficient; Proposition \ref{prop:derivativedecay} aims to estimate the high order derivatives of the associated Kolmogorov equation, which is crucial for the main theorem.

\begin{assumption}\label{ass:strongconvex}
The random loss functions and the expected loss satisfy the following conditions.
\begin{itemize} 
    \item[(i)] For any $\xi$, $f(\cdot, \xi)$ is smooth. For any compact set $K \subset \R^d$,  $\sup_{x\in K}\sup_{\xi}|\nabla f(x, \xi)|<\infty$. Moreover,  $f(\cdot,\xi)$ is confining in the sense that there exist $\nu>0$, $L >0$ independent of $\xi$ such that
    \[
    x\cdot\nabla f(x, \xi)\ge \nu |x|^2, \quad \forall |x|\ge L.
    \]
    
    \item[(ii)] $f(\cdot):=\mathbb{E}_{\xi}f(\cdot,\xi)$ is $\mu$-strongly convex in $\R^d$  i.e. $\forall x,y \in \R^d$,
    \[
    f(y) \geq f(x)+\nabla f(x) \cdot (y-x) + \frac{\mu}{2}|y-x|^2.
    \]
   
\end{itemize}
\end{assumption}

\begin{remark}
Here we are not assuming the growth rate of $f(\cdot; \xi)$. In later part of this paper, although we would have terms like its high order derivatives $\partial^{\alpha} f$, they are easy to control as the solutions to our SDE with a modified diffusion coefficient stay in a compact set. 
\end{remark}

\begin{remark}
The assumption on confinement of $f(\cdot;\xi)$ outside $B(0, L)$ is not restrictive because most models only use information in finite domains and this far-away behavior is satisfied by most models. 
\end{remark}

\begin{remark}
Note that we are only assuming the strong convexity of the expected loss $f(x)=\E f(x; \xi)$ instead of on each $f(\cdot; \xi)$. Though we are assuming the convexity of $f$ in the whole space, we actually only need the convexity of $f$ in a compact set where the SGD sees. Hence, our results in this paper actually apply to the behaviors near some local minimizers.
\end{remark}

With these assumptions, we then prove some auxilary results before our main theorem (Theorem \ref{thm:strongconvex}). The following says SGD will be trapped in a compact set if the initial measure is supported in a compact set.

\begin{lemma}\label{lmm:Scontraction}
Suppose Assumption \ref{ass:strongconvex} holds. Recall the definition of $X_n$ and the operator $\cS$ in Section \ref{intro}. Then we have the followings:
\begin{itemize}
    \item[(i)]  For the $L$ above in Assumption \ref{ass:strongconvex}, fix any $R>L$, then there exists $\eta_0>0$ such that for all $\eta\le \eta_0$, the condition $X_0\in B(0, R)$ implies that $X_n\in B(0, R)$ for all $n$.
    \item[(ii)]  For any continuous function $\phi$ and $x\in B(0, R)$, $(\cS \phi)(x)$ only depends on the values of $\phi$ in $B(0, R)$ with 
    \[\|\cS \phi\|_{L^{\infty}(B(0, R))} \le \|\phi\|_{L^{\infty}(B(0, R))}.\]
\end{itemize}
\end{lemma}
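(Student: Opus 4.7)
The plan is to prove (i) by induction on $n$, with a case split at radius $L$ separating the inner ball where the confinement of Assumption~\ref{ass:strongconvex}(i) does not apply from the annulus where it does, and then to deduce (ii) directly from the one-step analysis. The base case holds by hypothesis, so the heart of the argument is to show that, for $\eta$ below some explicit threshold, $|X_n|\le R$ implies $|X_{n+1}|\le R$. Since the underlying bound will be pointwise in $\xi_n$, one gets the full Markov-chain statement without needing any independence argument.

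For the inductive step I would start from the identity
\[
|X_{n+1}|^2 = |X_n|^2 - 2\eta\, X_n\cdot\nabla f(X_n;\xi_n) + \eta^2 |\nabla f(X_n;\xi_n)|^2,
\]
and split into two regimes. In the inner region $|X_n|\le L$, I bound $|X_{n+1}|\le L+\eta M_1$ with $M_1:=\sup_{|x|\le L}\sup_\xi |\nabla f(x;\xi)|$ (finite by Assumption~\ref{ass:strongconvex}(i)), forcing $|X_{n+1}|\le R$ once $\eta\le (R-L)/M_1$. In the annulus $L<|X_n|\le R$, the confinement $X_n\cdot\nabla f(X_n;\xi_n)\ge \nu|X_n|^2$ is available and, combined with $M_2:=\sup_{|x|\le R}\sup_\xi|\nabla f(x;\xi)|<\infty$, yields a contraction-plus-error estimate
\[
|X_{n+1}|^2 \le (1-2\eta\nu)|X_n|^2 + \eta^2 M_2^2.
\]
For $\eta<1/(2\nu)$ the right-hand side is monotone in $|X_n|^2$, so it suffices to check the worst case $|X_n|=R$, which amounts to the scalar condition $\eta M_2^2 \le 2\nu R^2$. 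Choosing $\eta_0$ as the minimum of the three resulting thresholds closes both branches of the case split.

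Part (ii) will then come essentially for free: the one-step bound above holds pointwise in $\xi$, so for any $x\in B(0,R)$ the shifted point $x-\eta\nabla f(x;\xi)$ lies in $B(0,R)$ for every realization of $\xi$, and the definition $(\mathcal{S}\phi)(x)=\mathbb{E}[\phi(x-\eta\nabla f(x;\xi))]$ therefore depends only on the values of $\phi$ in $B(0,R)$; taking absolute values inside the expectation produces the claimed $L^\infty$ bound. The only point requiring care is the bookkeeping in calibrating $\eta_0$ so that the inner-ball bound and the annular contraction simultaneously keep the trajectory inside $B(0,R)$; I do not anticipate any deeper obstacle, since each case uses a standard tool, namely the uniform boundedness of $\nabla f$ on compacts in the inner region and the quadratic dissipation furnished by the confinement in the annulus.
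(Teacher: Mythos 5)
Your proposal is correct and follows essentially the same route as the paper: the same case split at radius $L$, the same bound $|X_{n+1}|\le L+\eta M_1$ in the inner ball, the confinement inequality in the annulus, and the same deduction of (ii) from the pointwise-in-$\xi$ one-step bound. The only cosmetic difference is that you keep the dissipation as $\nu|X_n|^2$ (requiring the extra threshold $\eta\le 1/(2\nu)$ for monotonicity), while the paper lower-bounds it by $\nu L^2$ to get $|X_{n+1}|^2\le|X_n|^2$ directly; both choices of $\eta_0$ work.
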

\begin{proof}
We set 
\[
M_1:=\sup_{\xi}\sup_{x\in B(0, L)}|\nabla f(x, \xi)|<\infty,
\]
 and 
\[
 M_2:=\sup_{\xi}\sup_{x\in B(0, R)}|\nabla f(x, \xi)|.
 \]
 We claim that we can set $\eta_0=\min \{(R-L)/M_1, 2\nu L^2/M_2^2\}$.
In fact, if $|X_0|\le L$, then $|X_1|=|X_0-\eta\nabla f(X_0, \xi_0)|\le L+\eta M_1\le R$. Otherwise, $L<|X_0|\le R$, using (i) in Assumption \ref{ass:strongconvex}, we have
\begin{gather*}
\begin{split}
|X_1|^2 &=|X_0|^2-2\eta X_0\cdot\nabla f(X_0,\xi_0)+\eta^2|\nabla f(X_0, \xi_0)|^2 \\
&\le |X_0|^2-2\eta\nu L^2+\eta^2M_2^2\le |X_0|^2.\\
\end{split}
\end{gather*}
Simple induction yields the first claim. The second claim regarding $\cS\phi$ is a straightforward corollary of the first one, using the definition $\cS\phi(x) = \mathbb{E}\phi(x - \eta \nabla f(x;\xi))$ in \eqref{eq:weakmaster} and the fact that $x \in B(0,R)$ implies $(x - \eta \nabla f(x;\xi)) \in B(0,R)$.

\end{proof}

In the following lemma, we show that if we modify the diffusion coefficient $\Sigma$ in \eqref{eq:classicalsde} outside a certain compact set, then the solution $X$ to the diffusion approximation, which is a modified version of \eqref{eq:classicalsde}, stays in some compact set. This then allows us to consider the $C^k$ norm of $g$ on a bounded domain for suitable function $g$ in later sections.

We define the modified diffusion coefficient $\Lambda$ as follows:

\begin{gather}\label{eq:defLambda}
    \Lambda  = 
    \begin{cases}
    \begin{aligned}
    & \Sigma,  \quad |x|\leq R,\\
    & 0,  \quad |x|> R_2\\
    & \text{is~smooth}, \quad R\leq |x| \leq R_2.
    \end{aligned}
    \end{cases}
\end{gather}
Moreover, we require $\Lambda$ to be positive semidefinite everywhere. This is clearly possible. Indeed, since $\Sigma$ is semidefinite, we can consider $\tilde{\sigma}$ being the smoothness modification of $\sqrt{\Sigma}$, so $\tilde{\sigma}\tilde{\sigma}^T$ is the smoothness modification of $\Sigma$, which is obviously semidefinite everywhere.

\begin{lemma}\label{rmk3}
Take $R>L$ in Lemma \ref{lmm:Scontraction}, and $\Lambda$ is chosen as above. Under Assumption \ref{ass:strongconvex}, for any initial value $x \in B(0,R)$, there exists $\eta_1>0$ such that for all $\eta \leq \eta_1$, the solution $X$ to the following SDE
\begin{gather}\label{eq:modifiedSDE}
\begin{split}
& dX=-\left[\nabla f(X)+\eta \left(\frac{1}{4}\nabla | \nabla f(X)|^2\right)\right]\,dt+\sqrt{\eta\Lambda(X)}\,dW,\\
& X(0; x)=x
\end{split}
\end{gather}
satisfies that for all $t\ge 0$,
\[
X(t;x) \in  \overline{B(0,R_2)},\quad a.s. \,.
\]
\end{lemma}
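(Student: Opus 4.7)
The plan is to build a nonnegative $C^2$ Lyapunov function that vanishes identically on $\overline{B(0,R_2)}$ and is strictly positive outside, then apply It\^o's formula together with a localization to deduce $\E V(X_t)=0$, which forces $X_t\in\overline{B(0,R_2)}$ almost surely. The key structural observation is that \emph{both} $\Lambda$ and such a $V$ vanish on opposite sides of the sphere $|x|=R_2$: by construction $\Lambda$ is zero outside $B(0,R_2)$, while the Hessian-term candidate $V$ will be flat inside. Hence the generator $\cL V$ reduces to just the drift contribution on each side, and it is enough to check that drift contribution is nonpositive.

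Concretely, I would take
\[
V(x) = \bigl((|x|^2-R_2^2)_+\bigr)^3,
\]
which is in $C^2(\R^d)$ with $V,\nabla V$, and $\nabla^2 V$ all identically zero on $\overline{B(0,R_2)}$, and $V>0$ strictly outside. Then $\cL V\equiv 0$ on $\overline{B(0,R_2)}$ automatically. For $|x|>R_2$, where $\Lambda(x)=0$, only the drift survives:
\[
\cL V(x) = 6(|x|^2-R_2^2)^2\Bigl[-x\cdot\nabla f(x) - \tfrac{\eta}{4}\, x\cdot\nabla|\nabla f(x)|^2\Bigr].
\]
Since $R_2>L$, the confinement part of Assumption \ref{ass:strongconvex}(i) gives $x\cdot\nabla f(x)\ge \nu|x|^2\ge \nu R_2^2$. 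Because $f(\cdot,\xi)$ is smooth with $\nabla f$ bounded uniformly in $\xi$ on compacts, one can fix any $R_3>R_2$ and let $M$ denote the supremum of $|\nabla|\nabla f|^2|$ over $\overline{B(0,R_3)}$; then on the annulus $|x|\in[R_2,R_3]$,
\[
\cL V(x) \le 6(|x|^2-R_2^2)^2\bigl(-\nu R_2^2 + \tfrac{\eta}{4}R_3 M\bigr) \le 0
\]
provided $\eta\le \eta_1 := 4\nu R_2^2 / (R_3 M)$.

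To close the argument I would apply It\^o's formula to $V(X_{t\wedge\tau_{R_3}})$ with the exit time $\tau_{R_3} = \inf\{t: |X_t|\ge R_3\}$. On $[0,\tau_{R_3}]$ the coefficients $\nabla V$ and $\Lambda$ are bounded, so the stochastic integral is a true martingale. Taking expectation and using $V(X_0)=0$ (since $|X_0|\le R<R_2$) together with $\cL V\le 0$ in $\overline{B(0,R_3)}$ yields $\E V(X_{t\wedge\tau_{R_3}})\le 0$. Nonnegativity of $V$ then forces $V(X_{t\wedge\tau_{R_3}})=0$ a.s., i.e.\ $|X_{t\wedge\tau_{R_3}}|\le R_2$ a.s.\ for every $t$; in particular $\tau_{R_3}=\infty$ a.s., which finishes the proof.

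The main delicate point is engineering $V$ to have the right regularity at the sphere $|x|=R_2$: the cubic power $(\cdot)_+^3$ is the lowest power giving $V\in C^2$, which is needed for It\^o's formula; a higher power would still work but the cubic is the cleanest. The use of the localization radius $R_3$ is essential because the assumptions do not supply global bounds on $\nabla^2 f$, so the constant $M$ must be read off over a bounded set; everything else then reduces to the competition between the confinement constant $\nu$ (which gives the inward drift on the boundary) and the Hessian-driven $O(\eta)$ error, which is what determines the explicit $\eta_1$.
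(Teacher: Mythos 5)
Your argument is correct, but it takes a genuinely different route from the paper. The paper invokes the Stroock--Varadhan support theorem: it replaces the SDE by the associated controlled ODE $dX^v=b(X^v)\,dt+\sqrt{\eta\Lambda}\,v(t)\,dt$, notes that the control term is killed at $|x|=R_2$ because $\Lambda$ vanishes there, and then runs a barrier argument ($\frac{d}{dt}|X^v|^2<0$ whenever $|X^v|=R_2$) to confine every controlled trajectory, hence the support of the law, to $\overline{B(0,R_2)}$. You instead work directly on the SDE with the Lyapunov function $V(x)=\bigl((|x|^2-R_2^2)_+\bigr)^3$, exploiting the same structural fact --- that $\Lambda$ and $\nabla^2V$ vanish on complementary sides of the sphere $|x|=R_2$, so $\cL V$ reduces to the drift contribution, which the confinement hypothesis makes nonpositive for $\eta\le 4\nu R_2^2/(R_3M)$ --- and then conclude from $\E V(X_{t\wedge\tau_{R_3}})\le 0$ and $V\ge 0$. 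Your approach is more elementary and self-contained (no citation of the support theorem is needed, and the localization at $\tau_{R_3}$ cleanly handles the absence of global bounds on $\nabla^2 f$), and it also sidesteps the slightly informal ``if $|X^v|$ ever reaches $R_2$ then it is pushed back'' step of the paper; in fact your stochastic integral $\int\nabla V(X_s)\cdot\sqrt{\eta\Lambda(X_s)}\,dW_s$ is identically zero since $\nabla V$ is supported where $\Lambda=0$, which makes the martingale step trivial. What the paper's route buys in exchange is slightly more information: it identifies the support of the whole path measure via the closure of the controlled trajectories, rather than only the almost-sure confinement of $X_t$, though for the purposes of the rest of the paper only the latter is used. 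One small point worth making explicit in your write-up: the constant $M$ controls $\nabla|\nabla f|^2=2\nabla^2 f\,\nabla f$ for the \emph{expected} loss $f$, whose smoothness on compacts follows from Assumption \ref{ass:strongconvex}(i) by differentiating under the expectation; this is the same implicit step the paper takes.
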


\begin{proof}
To show this, we make use of the classical Stroock-Varadhan support theorem\cite{stroock2020support}. More precisely, for any $T>0$, consider the corresponding control problem
\begin{equation}\label{eq:controlproblem}
dX^v=-\left[\nabla f(X^v)+\eta \left(\frac{1}{4}\nabla | \nabla f(X^v)|^2\right)\right]\,dt+\sqrt{\eta\Lambda}v(t)\,dt, \quad X^v|_{t=0} = x,
\end{equation}
with $v(\cdot) \in V:= C([0, T]; \R^d)$.

Denote by $S_x^T$ the support of $X_t$ in $C([0, T]; \R^d)$ under the topology induced by the uniform convergence norm $\|X\|:=\sup_{0\le t\le T}|X_t|$ ($X_t$ is the solution of the SDE \eqref{eq:modifiedSDE} at time $t$), and $C_x^T(V)$ the set of all solutions  of the ODE~\eqref{eq:controlproblem} when the function $v$ varies in $V$. The Stroock-Varadhan support theorem says that
\begin{gather}\label{eq:SVclosure}
    S_x^T = \overline{C^T_x(V)}.
\end{gather}

Next, we show that the support of the ODE solution $X^v(t)$ lies in $B(0,R_2)$ for all $t\le T$.
By Assumption \ref{ass:strongconvex}, when $\eta$ is small, one has 
\[
-x\cdot \left(\eta\left(\frac{1}{4}\nabla | \nabla f(x)|^2\right) + \nabla f(x)\right) < 0
\]
for $|x| = R_2$.

If $|X^v|$ ever reaches $R_2$, then
\begin{equation}
\frac{d}{dt}|X^v|^2 = 2X^v\cdot \dot{X^v} = -2X^v\cdot \left(\eta\left(\frac{1}{4}\nabla | \nabla f(X^v)|^2\right) + \nabla f(X^v)\right) < 0.
\end{equation}
This in fact implies that $|X^v|<R_2$ for all $t \leq T$. 

Finally, combining with \eqref{eq:SVclosure}, and since $T$ is arbitrary, we conclude that $\mathrm{supp} X_t \subset \overline{ B(0,R_2)}$ for all $t$.
\end{proof}

Without loss of generality, in the remaining part of this paper, we set $\eta_0=\eta_1$ for the convenience.

The following result is crucial for the long time approximation. Note that the diffusion matrix has been modified compared with that in classical results, as is stated in \eqref{eq:backwardKol}. More precisely, we consider the following Kolmogorov equation associated with the modified diffusion approximation \eqref{eq:modifiedSDE}:
\begin{gather}\label{eq:modifiedpde}
u_t=-\left(\nabla f+\eta \left(\frac{1}{4}\nabla | \nabla f|^2\right)\right)\cdot\nabla u+\frac{1}{2}\eta \Lambda:\nabla^2u,\quad u|_{t=0} = \varphi,
\end{gather}
where the diffusion matrix $\Lambda$ is defined in Lemma \ref{rmk3}. In the next proposition, we estimate the high order derivatives of its solution $u$.

Below, for a multi-index $J = (J_1,J_2,...,J_d)$, we denote $|J| := \sum_{i=1}^d J_i$, and $\partial^J  := \partial^{J_1}_1\partial^{J_2}_2...\,\partial^{J_d}_d$.

\begin{proposition}\label{prop:derivativedecay}
Let $u$ be the unique solution of the Kolmogorov equation \eqref{eq:modifiedpde}.  Assume the  initial data $\varphi \in C^{k}$. Suppose that Assumption \ref{ass:strongconvex} holds.  Then for each multi-index $J$ with $0<|J|\le k$,  there exist $\eta_0>0$,
$C_J > 0$, $\gamma_J>0$, and an integer $p_J > 0$  such that for all $\eta\le \eta_0$, $x\in B(0,R) \subset \R^d$,
\begin{gather}\label{eq:gradbd}
|\partial^J u(x, t)| \leq C_J (1+|x|^{p_J})e^{-\gamma_J t}.
\end{gather}
\end{proposition}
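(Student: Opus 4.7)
The overall strategy is to exploit the probabilistic representation $u(x,t) = \mathbb{E}\varphi(X(t;x))$ for the solution of \eqref{eq:modifiedpde}, where $X(\cdot;x)$ solves the modified SDE \eqref{eq:modifiedSDE} starting from $x$, and then differentiate through the expectation in $x$. The Fa\`a di Bruno chain rule expresses $\partial^J u(x,t)$ as a finite sum of terms of the form $\mathbb{E}\left[(\partial^\alpha \varphi)(X_t) \prod_\ell \partial^{J_\ell}_x X(t;x)\right]$ with $|\alpha|\le|J|$ and $\sum_\ell |J_\ell| = |J|$. Because Lemma \ref{rmk3} confines $X(t;x)$ to $\overline{B(0,R_2)}$ for all $t \ge 0$, $\partial^\alpha \varphi$ is uniformly bounded along the trajectory; the polynomial factor $(1+|x|^{p_J})$ on $B(0,R)$ then absorbs any initial-data dependence of the constants. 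The proof therefore reduces to establishing exponential $L^p$-moment decay of the variational processes $\partial^J_x X(t;x)$, which I would carry out by induction on $|J|$.

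For the base case, set $b(x):=\nabla f(x) + \tfrac{\eta}{4}\nabla|\nabla f(x)|^2$ and let $\sigma$ be the smooth square root of $\Lambda$ supplied by Lemma \ref{rmk3}. The process $Y^{(i)}_t := \partial_{x_i}X(t;x)$ satisfies the linear SDE
\begin{gather*}
dY^{(i)}_t = -\nabla b(X_t)\,Y^{(i)}_t\,dt + \sqrt{\eta}\,\bigl(\nabla\sigma(X_t)\cdot Y^{(i)}_t\bigr)\,dW_t, \qquad Y^{(i)}_0 = e_i.
\end{gather*}
A direct computation gives $\nabla b = \nabla^2 f + \tfrac{\eta}{2}(\nabla^2 f)^2 + \tfrac{\eta}{2}(\nabla^3 f\cdot \nabla f)$, and by Assumption \ref{ass:strongconvex}(ii) together with the uniform boundedness of the derivatives of $f$ on $\overline{B(0,R_2)}$ one has $\nabla b \succeq (\mu - C\eta)I$ on that set. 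Applying It\^o's formula to $|Y^{(i)}_t|^{2p}$ and using that $\nabla\sigma$ is bounded on $\overline{B(0,R_2)}$, taking expectation yields
\[
\tfrac{d}{dt}\mathbb{E}|Y^{(i)}_t|^{2p} \le -(2p\mu - C_p\eta)\,\mathbb{E}|Y^{(i)}_t|^{2p},
\]
so that $\mathbb{E}|Y^{(i)}_t|^{2p} \le e^{-\gamma_1(p) t}$ for some $\gamma_1(p) > 0$ whenever $\eta$ is sufficiently small.

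For $|J| \ge 2$, differentiating \eqref{eq:modifiedSDE} once more shows that $Z^J_t := \partial^J_x X(t;x)$ solves an inhomogeneous linear SDE with the same principal part $-\nabla b(X_t)Z^J_t\,dt + \sqrt{\eta}\,(\nabla\sigma(X_t)\cdot Z^J_t)\,dW_t$ plus a forcing $F^J_t\,dt + G^J_t\,dW_t$ that is polynomial in the lower-order variations $\{Z^{J'}_t : |J'|<|J|\}$ with coefficients given by higher derivatives of $b$ and $\sigma$ at $X_t$. Since $X_t \in \overline{B(0,R_2)}$, these coefficients are uniformly bounded; the inductive hypothesis then gives exponential $L^p$-decay of $F^J_t$ and $G^J_t$, and combining with the same It\^o/Gronwall contraction argument as in the base case yields exponential decay of $\mathbb{E}|Z^J_t|^{2p}$ (with a rate $\gamma_J>0$ possibly smaller than $\gamma_1$). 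Plugging these bounds into the Fa\`a di Bruno decomposition via H\"older's inequality gives \eqref{eq:gradbd}. The main technical obstacle is precisely this inductive step: one must verify that (i) the $O(\eta)$ correction from differentiating the modified drift does not destroy the strong-convexity contraction, and (ii) the noise generated by differentiating $\sqrt{\Lambda}$ enters only at order $O(\eta)$, so that for $\eta$ small enough the spectral gap $\mu$ continues to dominate uniformly in $|J|\le k$ and in time. This uses crucially that $\sigma$ is smooth with bounded derivatives of all orders on the compact set provided by Lemma \ref{rmk3}.
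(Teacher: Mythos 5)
Your proposal is correct and follows essentially the same route as the paper: the probabilistic representation $u(x,t)=\mathbb{E}\varphi(X(t;x))$, a Fa\`a di Bruno-type decomposition of $\partial^J u$ into expectations of $\partial^\alpha\varphi(X)$ times products of variational processes, and an induction on $|J|$ establishing exponential $L^p$-decay of $\partial^J_x X$ via It\^o's formula, where strong convexity of $f$ supplies the contraction and the $O(\eta)$ corrections from the modified drift and from $\nabla\sigma$ are controlled on the compact set $\overline{B(0,R_2)}$ given by Lemma \ref{rmk3}. The only cosmetic difference is that the paper also records polynomial-in-$|x|$ moment bounds for $X$ itself (its Step 1), which your argument makes unnecessary by invoking the almost-sure confinement directly.
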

\begin{proof}

First of all, we consider $X(t; x)$ which solves the SDE \eqref{eq:modifiedSDE}. Then \eqref{eq:modifiedpde} is its associated backward Kolmogorov equation, and
\begin{gather}\label{eq:representationu}
u(x, t)=\mathbb{E}\varphi\left(X(t; x)\right), \quad \forall x \in B(0,R).
\end{gather}
For the convenience of notation, we denote
\begin{gather}
    \sigma := \sqrt{\Lambda / 2}.
\end{gather}

{\bf Step 1:} Estimates of $\mathbb{E}|X(t,x)|^{2m}$.

We claim that for nonnegative integer $m$,
\begin{gather}\label{eq:mattingly}
\mathbb{E}|X(t; x)|^{2m}\le C_m\left(1+|x|^{2m}e^{-\gamma_m t}\right).
\end{gather}
This can be proved easily using It\^o's formula and induction on $m$. For the convenience of notations, we will use $X$ to represent $X(t; x)$ in the current proof. Applying It\^o's formula to $|X|^{2m}$, for $m \geq 1$, we have
\begin{multline}\label{29}
d|X|^{2m}=\Big(2m|X|^{2m-2}X\cdot[-(\nabla f(X)+\eta (\frac{1}{4}\nabla | \nabla f(X)|^2))] \\+\frac{1}{2}(2m)|X|^{2m-2} M : 2\eta \sigma^2(X)\Big)\,dt +2m|X|^{2m-2}X\cdot \sqrt{2\eta}\sigma(X)\cdot dW,
\end{multline}
with
\[
    M:=I_d+(2m-2)\frac{X\otimes X}{|X|^2}.
\]
Taking expectation, we know that for any fixed $\bar{\mu} \in (0,\mu)$, it holds
\begin{gather}\label{210}
\frac{d}{dt}\mathbb{E}|X|^{2m}\leq -2m\bar{\mu}\mathbb{E}|X|^{2m}+A_2\mathbb{E}|X|^{2m-2},\quad m \geq 1.
\end{gather}
In the inequality above, $A_2$ is a positive constant depending on $m$.  Indeed, since $f$ is strongly convex, $X \cdot \nabla f(X) = X \cdot \left(\nabla f(X) - \nabla f(0)\right) + X \cdot \nabla f(0) \geq \mu |X|^2 + X \cdot \nabla f(0) $. Also, by Proposition \ref{rmk3}, $\mathbb{P}[X \in B(0,R_2)] = 1$. Clearly, the $C_0(B(0,R_2))$-norm of $\sigma$ and $\frac{1}{4}\nabla|\nabla f(\cdot)|^2$ is finite, so it holds that
\begin{gather*}
    \begin{aligned}
        \frac{d}{dt}\mathbb{E}|X|^{2m} &\leq -2m\mu \mathbb{E}|X|^{2m} + A_0 \mathbb{E} |X|^{2m-2} + \mathbb{E}\left[(2m|X|^{2m-2}X\cdot\left( \eta \left(\frac{1}{4}\nabla|\nabla f(X)|^2\right) - \nabla f(0)\right)\right]\\
        & \leq -2m\mu\mathbb{E}|X|^{2m} + A_0 \mathbb{E}|X|^{2m-2} + A_1 \mathbb{E}|X|^{2m-1}\\
        & = -2m\mu \mathbb{E}|X|^{2m} + A_0 \mathbb{E}|X|^{2m-2} + A_1 \mathbb{E}\sqrt{(\epsilon_m |X|^{2m})(\frac{1}{\epsilon_m}|X|^{2m-2})}\\
        & \leq -2m\mu \mathbb{E}|X|^{2m} + A_0 \mathbb{E}|X|^{2m-2} + \frac{1}{2}A_1 \mathbb{E}\left[\epsilon_m |X|^{2m}+\frac{1}{\epsilon_m}|X|^{2m-2}\right]\\
        & \leq -2m\bar{\mu} \mathbb{E}|X|^{2m} + A_2 \mathbb{E}|X|^{2m-2}.
    \end{aligned}
\end{gather*}
Above, we have chosen $\epsilon_m$ small enough such that $\frac{1}{2}A_1\epsilon_m < 2m(\mu-\bar{\mu})$ to ensure that the last inequality holds. So now \eqref{210} is obtained. 

Next, we consider induction on $m$. \eqref{eq:mattingly} is obvious for $m=0$. For $m>0$, using induction hypothesis, we have
\begin{gather}
\frac{d}{dt}\mathbb{E}|X|^{2m}\leq -2m\bar{\mu}\mathbb{E}|X|^{2m}+A_3(1+|x|^{2m-2}e^{-\gamma_{m-1}t}),
\end{gather}
where $A_3$ is a positive constant depending on $m$. Using Gr\"ownwall's inequality, we have 
\begin{gather*}
\begin{aligned}
\mathbb{E}|X|^{2m} & \leq e^{-2m\bar{\mu}t}|x|^{2m} +\int_0^t A_3(1+|x|^{2m-2}e^{-\gamma_{m-1}s}) e^{-2m\bar{\mu}(t-s)}ds\\
&\leq c_m(1+|x|^{2m}e^{-\gamma_mt}),
\end{aligned}
\end{gather*}
for some positive constants  $A_4$, $A_5$, $c_m$ and $\gamma_m$, and the last inequality is due to Young's inequality. Hence \eqref{eq:mattingly} holds for any nonnegative integer $m$.

{\bf Step 2:} Estimates of the moments of $\partial_x^J X(t,x)$

It is well-known that the stochastic map $x\mapsto X(t, x)$ is a diffeomorphism almost surely for all $t$ \cite{kunita1997stochastic,le1984stochastic}, so it is valid here to take partial derivatives with respect to $x$. Below, we will consider
\[
X^{(J)}(t, x):=\partial_x^{J}X(t,x).
\]
Similarly, we will use $\partial^J$ to represent $\partial_x^J$ and
$X^{(J)}$ to represent $X^{(J)}(t, x)$ for convenience. 

If $|J|=1$ (note that $|J| = \sum_{i=1}^dJ_i$ ), by similar discussion in \cite{elworthy1994formulae},  $X^{(J)}$ satisfies the following SDE
\begin{equation}
\begin{aligned}
 & dX^{(J)}  =-\left[\nabla^2f(X)+\eta\left(\nabla (\frac{1}{4}\nabla | \nabla f|^2)\right)^T\right]\cdot X^{(J)}dt+\sqrt{2\eta}(X^{(J)}\cdot\nabla\sigma)\cdot dW,\\
& X^{(J)}(0; x)=e_J.
\end{aligned}
\end{equation}
Formally, the equation for $X^{(J)}$ is obtained by taking derivative of $X$ on $x$ in the SDE \eqref{eq:modifiedSDE}. Obviously, $X^{(J)}$ also has compact support, though we do not use this property in our proof.

Applying It\^o's formula to $|X^{(J)}(t; x)|^p$,  for $p\ge 2$
\begin{gather*}
\begin{aligned}
 d|X^{(J)}|^{p}  =&\Big[p|X^{(J)}|^{p-2}X^{(J)}\cdot\left(-\left(\nabla^2f(X)+\eta\frac{1}{4}\nabla^2 | \nabla f|^2 \right)\right)\cdot X^{(J)}\\
& +\eta p|X^{J}|^{p-2}X^{(J)}_kX_{\ell}^{(J)}\partial_{k}\sigma_{i,\cdot}\partial_{\ell}\sigma_{i,\cdot}:M_J\Big]dt\\
& +p|X^{(J)}|^{p-2}X^{(J)}\cdot\sqrt{2\eta}(X^{(J)}\cdot\nabla\sigma) \cdot dW
\end{aligned}
\end{gather*}
with 
\[
 |X^{(J)}(0,x)|^{p}=|e_{J}|^{p}=1.
\]
Above, 
\[
M_J:=I_d+(p-2)\frac{X^{(J)}\otimes X^{(J)}}{|X^{J}|^2}.
\]
Similarly with \eqref{210}, using the fact that $X$ is bounded, for all $\eta$ small enough, the matrix $\left(\nabla^2f+\eta\frac{1}{4}\nabla^2 | \nabla f|^2 + \eta \partial \sigma_{i,\cdot}\partial \sigma_{i,\cdot} :M_d\right)$ is positive definite, after taking expectation, it holds
\begin{gather}
\frac{d}{dt}\mathbb{E}|X^{(J)}|^p\leq -p \gamma \mathbb{E}|X^{(J)}|^{p}.
\end{gather}
Here, $\gamma$ is a positive constant. By Gr\"onwall's inequality,

\begin{gather}\label{eq:j=1}
\mathbb{E}|X^{(J)}(t; x)|^p \le \exp(-p\gamma t)|e_J|^p=\exp(-p\gamma t).
\end{gather}

Now, we do induction for general $J$. Suppose we have constructed $X^{(I)}(t; x)$ with $|I|\le |J|-1$, such that the moments satisfy:
\begin{gather}\label{eq:momentcontrol}
\mathbb{E}|X^{(I)}(t; x)|^p \le C(1+|x|^{q_{I}})\exp(-\gamma_{p,I} t),~~p\ge 2.
\end{gather}

Now, we consider $J$. The new introduced variable $X^{(J)}$ satisfies the following equation:
\begin{multline}\label{eq:eqforhighQJ}
dX^{(J)}=-\left(\nabla^2f+\eta\left(\nabla \left(\frac{1}{4}\nabla | \nabla f|^2\right)\right)^T\right)\cdot X^{(J)} \,dt
+Q_J\left(\partial^{\alpha}f, \partial^{\beta}\left(\frac{1}{4}\nabla | \nabla f|^2\right), X^{(I)} \right)\,dt\\
+\sqrt{2\eta}\left(X^{(J)}\cdot\nabla\sigma+R_J\left(\partial^{\alpha}\sigma,  X^{(I)}\right)\right)\cdot dW,
\end{multline}
with the initial condition
\[
X^{(J)}(0;x)=0 \in \mathbb{R}^d.
\]
In equation \eqref{eq:eqforhighQJ}, $Q_J\left(\partial^{\alpha}f, \partial^{\beta}(\frac{1}{4}\nabla | \nabla f|^2), X^{(I)}\right)$ is a polynomial of $\partial^{\alpha}f$ with $|\alpha|\le |J|$,
$\partial^{\beta}(\frac{1}{4}\nabla | \nabla f|^2)$ with $|\beta|\le |J|$ , and $X^{(I)}$ with $|I|\le |J|-1$. Similarly, $R_J$ is a polynomial 
`of $\partial^{\alpha}\sigma$ with $|\alpha|\le |J|$ and $X^{(I)}$ with $|I|\le |J|-1$. Note that each term in both polynomials has some $X^{(I)}$ with positive order.

Again, using It\^o's formula, we find that for $p\ge 2$,
\begin{multline*}
\frac{d}{dt}\mathbb{E}|X^{(J)}|^p =\mathbb{E}p|X^{(J)}|^{p-2}X^{(J)}\cdot \Big[\left(-\nabla^2f-\eta\left(\nabla \left(\frac{1}{4}\nabla | \nabla f|^2\right)\right)^T\right)\cdot X^{(J)}\\
+Q_J\left(\partial^{\alpha}f, \partial^{\beta}\left(\frac{1}{4}\nabla | \nabla f|^2\right), X^{(I)} \right)\Big]
+\eta p\mathbb{E}|X^{J}|^{p-2}X^{(J)}_kX_{\ell}^{(J)}\partial_{k}\sigma_{i,\cdot}\partial_{\ell}\sigma_{i,\cdot}:M_J\\
+\eta p\mathbb{E}|X^{J}|^{p-2} R_JR_J^T:M_J
+2\eta p\mathbb{E}|X^{J}|^{p-2}(X_{J}\cdot \nabla \sigma)\cdot R_J^T:M_J.
\end{multline*}

Similarly with the case $|J|=1$, for $\eta < \eta_0$, the first and third term above can be bounded above by $-p\gamma\mathbb{E}|X^{(J)}|^p$ with $\gamma$ being a positive constant. Since $X$ is bounded, the second term (``$Q_J$" term) can be bounded above by $p\mathbb{E}|X^{(J)}|^{p-1}\sum_{0<|I|\le |J|-1}C_{1,I}|X^{(I)}|^{q_I}$. Other terms can be bounded similarly. So we have
\begin{multline*}
\frac{d}{dt}\mathbb{E}|X^{(J)}|^p\le -p\gamma\mathbb{E}|X^{(J)}|^p
+p\bar{A}\mathbb{E}|X^{(J)}|^{p-1}\sum_{0<|I|\le |J|-1}C_{1,I}|X^{(I)}|^{q_I}
\\+p\eta \mathbb{E}|X^{(J)}|^{p-2}\sum_{0<|I|\le |J|-1}C_{2,I}|X^{(I)}|^{r_{2,I}}.
\end{multline*}

Next, we estimate the $\mathbb{E}|X^{(J)}|^{p-1}$ term and the $\mathbb{E}|X^{(J)}|^{p-2}$ term.
Applying Young's inequality, for any $\delta>0$,  we have
\[
\mathbb{E}|X^{(J)}|^{p-1}\sum_{|I|\le |J|-1}C_{1,I}|X^{(I)}|^{q_I}
\le \delta\frac{(p-1)\mathbb{E}|X^{(J)}|^p}{p}+C_3\frac{1}{p\delta}\mathbb{E}\left(\sum_{|I|\le |J|-1}C_{1,I}|X^{(I)}|^{q_I}\right)^p.
\]
The $\mathbb{E}|X^{(J)}|^{p-2}$ term can be similarly controlled if $p>2$. If $p=2$, we just leave it as it appears. Now, we choose $\delta$ small enough such that $\gamma-2\delta>0$. Then, for $\eta$ small enough, combining with the induction assumption on the moments \eqref{eq:momentcontrol}, we find that
\begin{gather}
    \frac{d}{dt} \mathbb{E}|X^{(J)}|^p \leq -p\bar{\gamma} \mathbb{E}|X^{(J)}|^p +  C(1+|x|^{q_{J}})\exp(-\gamma_{p,J} t),
\end{gather}
where $\bar{\gamma}$, $C$ and $\gamma_{p,J}$ are positive constants. Hence \eqref{eq:momentcontrol} also holds for $J$ using Gr\"onwall's inequality. Namely, we can control the moments by
\begin{gather}\label{eq:momentcontrol-J}
\mathbb{E}|X^{(J)}(t; x)|^p \le C(1+|x|^{q_{J}})\exp(-\gamma_{p,J} t),~~p\ge 2.
\end{gather}

{\bf Step 3:} Estimates $\partial^J u(x, t)$.

Finally, using \eqref{eq:representationu} and similar discussion in \cite{elworthy1994formulae}, we have

\begin{gather}\label{eq:DerivativerepresentationGeneralJ}
    \partial^Ju(x, t)= 
    \begin{cases}
     \mathbb{E}\left[\nabla \varphi(X) \cdot X^{(J)}\right], & \quad |J| = 1,\\
     \mathbb{E}\left[\nabla\varphi(X)\cdot X^{(J)}+ P_J\left(\partial^{\alpha}\varphi(X), X^{(I)}\right) \right], & \quad |J| \geq 2,
    \end{cases}
\end{gather}
where $P_J(\varphi, X^{(I)})$ is a polynomial of $X^{(I)}$ with $|I|\le |J|-1$ and  $\partial^{\alpha}\varphi(X)$ with $|\alpha|\le |J|$. Using \eqref{eq:mattingly}, \eqref{eq:momentcontrol-J}, and applying H\"older inequality to \eqref{eq:DerivativerepresentationGeneralJ}  yields the result for $\partial^J u$. This then finishes the proof.
\end{proof}

\begin{remark}
The initial value $x$ is inside $B(0,R)$, but the solution $X$ to the SDE \eqref{eq:modifiedSDE} can be outside the ball $B(0, R)$. However, this proposition ensures that $\sup_{x\in B(0,R)}|\partial^J u(x,t)|$ can still be controlled.
\end{remark}

\section{Main theorem: uniform in time diffusion approximation}\label{sec:main}

Now, we fix $R>L$ mentioned in Lemma \ref{lmm:Scontraction} and consider the initial distribution/law of $X_0$ that is supported in $B(0, R)$.
We consider $\Lambda$ as in \eqref{eq:defLambda}. Observe that under such setting, it holds that
\begin{gather}
\|\Lambda\|_{C^k(\mathbb{R}^d)}\le C_k\|\Sigma\|_{C^k(B(0, R))}.
\end{gather}

Now consider the SDE
\begin{gather}\label{eq:newSDE}
dX=-\left(\nabla f(X)+\eta \left(\frac{1}{4}\nabla|\nabla f(X)|^2\right) \right)\,dt+\sqrt{\eta \Lambda(X)}\,dW,\quad X|_{t=0} = x.
\end{gather}
Let $u$ be the solution to the Kolmogorov equation $\partial_tu=\cL u$ with $u(x, 0)=\varphi(x)$. (Recall the definition of $\cL$ and $u$ in \eqref{eq:modifiedpde}.)
We have the following long time diffusion approximation.
\begin{theorem}\label{thm:strongconvex}
Suppose Assumption \ref{ass:strongconvex} holds. Then the SDE \eqref{eq:newSDE} approximates the SGD \eqref{eq:sgd} with weak second order accuracy uniformly in time for initial distributions supported in $B(0,R)$. More precisely,  for $R$ that is required in Lemma \ref{lmm:Scontraction} and $\varphi\in C^{4}$, there exists $C=C(\varphi,R)>0$ that depends on $\varphi$ and $R$ but independent of $\eta$ such that when $\eta$ is sufficiently small (recall \eqref{eq:Un} for $U^n$)
\begin{gather}
\sup_{n\ge 0}\sup_{x\in B(0, R)}|U^n(x)-u(x, n\eta)|< C \eta^2.
\end{gather}
\end{theorem}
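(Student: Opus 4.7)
The plan is to run a backward-error analysis built around a standard telescoping identity. Let $T_{\eta}$ denote the one-step continuous-time semigroup of the SDE \eqref{eq:newSDE}, so by Feynman--Kac $T_{\eta}^{k}\varphi=u(\cdot,k\eta)$. Combined with the discrete semigroup relation $U^{n}=\cS^{n}\varphi$, the algebraic identity $\cS^{n}-T_{\eta}^{n}=\sum_{k=0}^{n-1}\cS^{\,n-1-k}(\cS-T_{\eta})T_{\eta}^{k}$ gives
\[
U^{n}(x)-u(x,n\eta)=\sum_{k=0}^{n-1}\cS^{\,n-1-k}\phi_{k}(x),\qquad \phi_{k}:=\cS\,u(\cdot,k\eta)-u(\cdot,(k+1)\eta).
\]
By the $L^{\infty}(B(0,R))$ contraction of $\cS$ (Lemma \ref{lmm:Scontraction}(ii)), each term is controlled by $\|\phi_{k}\|_{L^{\infty}(B(0,R))}$, so the theorem reduces to proving a per-step local-error estimate of the form $\|\phi_{k}\|_{L^{\infty}(B(0,R))}\le C\eta^{3}e^{-\gamma k\eta}$. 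Summing yields $\|U^{n}-u(\cdot,n\eta)\|_{L^{\infty}(B(0,R))}\le C\eta^{3}\sum_{k\ge 0}e^{-\gamma k\eta}\le C\eta^{3}/(1-e^{-\gamma\eta})\le C'\eta^{2}$ uniformly in $n$, which is the claim.

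For the local error, set $Y_{\mathrm{SGD}}:=-\eta\nabla f(x;\xi)$ and $Y_{\mathrm{SDE}}:=X(\eta;x)-x$, so $\cS u(x,k\eta)=\E\,u(x+Y_{\mathrm{SGD}},k\eta)$ and $u(x,(k+1)\eta)=\E\,u(x+Y_{\mathrm{SDE}},k\eta)$. Taylor expanding $u(x+\cdot,k\eta)$ to third order about $x$ in space, with integral remainder controlled by $\nabla^{4}u$, one obtains
\[
\phi_{k}(x)=\sum_{|J|=1}^{3}\frac{1}{J!}\partial^{J}u(x,k\eta)\bigl(\E Y_{\mathrm{SGD}}^{J}-\E Y_{\mathrm{SDE}}^{J}\bigr)+\E[R_{4}^{\mathrm{SGD}}-R_{4}^{\mathrm{SDE}}].
\]
The crux is that the modifications in \eqref{eq:newSDE} are calibrated so that, when $x\in B(0,R)$, all three moment differences are $O(\eta^{3})$. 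For $|J|=1$: the drift correction $-\frac{\eta}{4}\nabla|\nabla f|^{2}$ exactly cancels the second-order It\^o contribution $\frac{\eta^{2}}{2}\nabla^{2}f\,\nabla f=\frac{\eta^{2}}{4}\nabla|\nabla f|^{2}$ generated by expanding $\nabla f(X(s;x))$ around $x$ inside the time integral, yielding $\E Y_{\mathrm{SDE}}=-\eta\nabla f(x)+O(\eta^{3})$. For $|J|=2$: since $\Lambda=\Sigma$ on $B(0,R)$, It\^o's isometry gives $\E Y_{\mathrm{SDE}}^{\otimes 2}=\eta^{2}(\Sigma+\nabla f\otimes\nabla f)+O(\eta^{3})=\eta^{2}\E[\nabla f(x;\xi)^{\otimes 2}]+O(\eta^{3})$. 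For $|J|=3$, both moments are themselves $O(\eta^{3})$. The remainder is $O(\eta^{4})\sup_{y\in\overline{B(0,R_{2})}}|\nabla^{4}u(y,k\eta)|$, because $\E|Y_{\mathrm{SGD}}|^{4}$ and $\E|Y_{\mathrm{SDE}}|^{4}$ are both $O(\eta^{4})$ thanks to the confinement from Lemmas \ref{lmm:Scontraction} and \ref{rmk3} and the local boundedness of $\nabla f(\cdot;\xi)$. Altogether, $|\phi_{k}(x)|\le C\eta^{3}\max_{1\le|J|\le 4}\sup_{y\in\overline{B(0,R_{2})}}|\partial^{J}u(y,k\eta)|$.

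The required exponential decay of the spatial derivatives of $u$ is exactly the content of Proposition \ref{prop:derivativedecay} applied with $k=4$ (its proof extends to initial points $y\in\overline{B(0,R_{2})}$ because the SDE started there still stays inside $\overline{B(0,R_{2})}$), giving $|\partial^{J}u(y,k\eta)|\le Ce^{-\gamma k\eta}$ and hence the per-step bound $\|\phi_{k}\|_{L^{\infty}(B(0,R))}\le C\eta^{3}e^{-\gamma k\eta}$ that closes the argument in the first paragraph. The main obstacle is the moment-matching computation: one must carry out the It\^o expansion of $\E Y_{\mathrm{SDE}}^{\otimes j}$ far enough to see the precise $O(\eta^{3})$ cancellation against the corresponding SGD moments, and this is exactly what dictates the particular form of the modified drift and of the identification $\Lambda=\Sigma$ on $B(0,R)$ in \eqref{eq:newSDE}. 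A secondary subtlety is that the local error must be expressed purely in terms of $\nabla^{\le 4}u$ to match the regularity $\varphi\in C^{4}$ available through Proposition \ref{prop:derivativedecay}; expanding $T_{\eta}u$ in $t$ via $\partial_{t}=\cL$ would bring in $\cL^{3}u$ and hence $\nabla^{6}u$, which is why we expand in $x$ via Taylor and bound moments via It\^o instead.
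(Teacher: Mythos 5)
Your proposal is correct and shares the paper's global skeleton exactly: the telescoping identity $U^n-u(\cdot,n\eta)=\sum_k \cS^{n-1-k}(\cS u(\cdot,k\eta)-u(\cdot,(k+1)\eta))$, the $L^\infty(B(0,R))$ contraction of $\cS$ from Lemma \ref{lmm:Scontraction}(ii), a per-step error of size $O(\eta^3 e^{-\gamma k\eta})$ fed by Proposition \ref{prop:derivativedecay}, and the geometric sum $\eta^3/(1-e^{-\gamma\eta})\lesssim\eta^2$. Where you genuinely diverge is in the one-step consistency estimate. The paper expands the continuous side via the PDE: Lemma \ref{lmm:localtruncation} iterates the Duhamel integral representation of $\partial_t u=\cL u$ until every residual term carries an explicit $\eta^3$, which keeps the whole comparison local to $B(0,R)$ (where also the SGD increments live, and where $\Lambda=\Sigma$) and requires only $\|u\|_{C^{1,4}(B(0,R))}$ and $\|f\|_{C^4(B(0,R))}$. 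You instead expand the continuous side probabilistically, writing $u(x,(k+1)\eta)=\E\,u(X(\eta;x),k\eta)$ and matching the first three moments of the SGD and SDE increments via an It\^o expansion (Talay--Tubaro style); your cancellation $\E Y_{\mathrm{SDE}}=\eta b(x)+\tfrac{\eta^2}{2}\cL b(x)+O(\eta^3)=-\eta\nabla f(x)+O(\eta^3)$ and the second-moment identity via It\^o isometry are correct. Your route has the virtue of making transparent \emph{why} the modified drift and the identification $\Lambda=\Sigma$ on $B(0,R)$ are exactly what weak order two demands, and both routes equally avoid $C^6$ norms of $u$ so that $\varphi\in C^4$ suffices. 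The price you pay is that the Taylor remainder and the intermediate points $x+\theta Y_{\mathrm{SDE}}$ live in $\overline{B(0,R_2)}$ rather than $B(0,R)$, so you need Proposition \ref{prop:derivativedecay} for initial points in the larger ball; you correctly flag this, and the extension does go through since the confinement argument of Lemma \ref{rmk3} (inward-pointing drift at $|x|=R_2$ and vanishing diffusion there) applies verbatim to such initial data, after which Steps 1--3 of the proposition are unchanged. With that one extension made explicit, your argument closes.
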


Now, we do some preparation to prove this theorem. For the convenience of the presentation, we introduce
\begin{gather}
u^n(x):=u(x, n\eta),
\end{gather}
and denote the generator associated with \eqref{eq:newSDE}:
\begin{gather}
\cL=-\left(\nabla f(x)+\eta (\frac{1}{4}\nabla | \nabla f(x)|^2)\right)\cdot\nabla+\frac{1}{2}\eta\Lambda(x):\nabla^2=:\cL_1+\eta\cL_2,
\end{gather}
so that $\cL_1=-\nabla f(x)\cdot\nabla$ and $\cL_2=-\left(\frac{1}{4}\nabla | \nabla f(x)|^2\right)\cdot\nabla+\frac{1}{2}\Lambda:\nabla^2$. Since $u^{n+1}=e^{\eta \cL}u^n$, direct semigroup expansion would give us that
\begin{gather}
\begin{split}
& u^j=u^{j-1}+\eta \cL u^{j-1}(x)+\frac{1}{2}\eta^2 \cL^2u^{j-1}(x)+\eta^3 R_1\\
& =u^{j-1}+\eta\left(-\nabla f-\eta (\frac{1}{4}\nabla | \nabla f|^2)\right)\cdot\nabla u^{j-1} +\frac{\eta^2}{2}\Lambda:\nabla^2u^{j-1}
+\frac{\eta^2}{2} \nabla f\cdot\nabla(\nabla f\cdot\nabla u^{j-1})+\eta^3 R_2\\
& = u^{j-1} -\eta \nabla f \cdot \nabla u^{j-1} + \frac{\eta^2}{2} (\Lambda + \nabla f \otimes \nabla f) : \nabla^2 u^{j-1} + \eta^3 R_2.
\end{split}
\end{gather}
In the remainder term $R_2$, we have derivatives of $u$ and $f$. It seems that we need the $C^6$ norms of $u$ and $f$ to bound the terms of order $\eta^3$. In fact, we can relax this.
\begin{lemma}\label{lmm:localtruncation}
For $R>0$ that is required in Lemma \ref{lmm:Scontraction}, then it holds that
\begin{multline}
\sup_{x\in B(0,R)}\left|u^{n+1}(x)-\left(u^n+\eta \cL u^n+\frac{1}{2}\eta^2\nabla f\cdot\nabla(\nabla f\cdot\nabla u^n)\right) \right| \\
\le C(\|f\|_{C^4(B(0, R))})\sup_{t\in [t^n, t^{n+1}]}\|u(\cdot,t)\|_{C^{1,4}(B(0,R))}\eta^3,
\end{multline}
where $\|u\|_{C^{p,q}(U)}:=\sum_{p\le |\alpha|\le q}\sup_{x\in U}|\partial^{\alpha}u|$ and $C(\|f\|_{C^4(B(0, R))})$ is a constant depending on $\|f\|_{C^4(B(0, R))}$.
\end{lemma}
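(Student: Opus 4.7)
I would avoid the naive second-order Taylor expansion in time, which would force $\cL^3 u$ into the remainder and therefore require up to six spatial derivatives of $u$. Instead I stop the expansion at first order with integral remainder: since $\partial_t u = \cL u$,
\begin{equation*}
u^{n+1}(x) = u^n(x) + \eta\, \cL u^n(x) + \int_{t^n}^{t^{n+1}} (t^{n+1}-s)\, \cL^2 u(x, s)\, ds.
\end{equation*}
The task is then to extract $\tfrac{1}{2}\eta^2 \cL_1^2 u^n(x) = \tfrac{1}{2}\eta^2\, \nabla f\cdot\nabla(\nabla f\cdot\nabla u^n)$ from this integral while showing that the residual is of order $\eta^3$ and controlled by the norms stated in the lemma.

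Decompose $\cL^2 = \cL_1^2 + \eta(\cL_1\cL_2 + \cL_2\cL_1) + \eta^2 \cL_2^2$ inside the integrand. The two pieces carrying explicit powers of $\eta$ can be handled by brute force, using $\int_{t^n}^{t^{n+1}}(t^{n+1}-s)\,ds = \eta^2/2$: the $\eta(\cL_1\cL_2 + \cL_2\cL_1)u$ contribution is at most of order $\eta^3$, and the $\eta^2\cL_2^2 u$ contribution at most of order $\eta^4$, which for $\eta\le 1$ is absorbed into $O(\eta^3)$. Since $\cL_1\cL_2$ and $\cL_2\cL_1$ are third-order operators whose coefficients involve derivatives of $f$ up to order three, and $\cL_2^2$ is fourth-order with coefficients built from derivatives of $f$ up to order four, both contributions are bounded by $C(\|f\|_{C^4(B(0,R))})\sup_{s\in[t^n, t^{n+1}]}\|u(\cdot, s)\|_{C^{1,4}(B(0,R))}\,\eta^3$.

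For the remaining $\cL_1^2 u(\cdot, s)$ piece, I perform a second Taylor expansion in time: $\cL_1^2 u(\cdot, s) = \cL_1^2 u^n + \int_{t^n}^{s} \cL_1^2 \cL u(\cdot, \tau)\, d\tau$. Substituting back reproduces exactly $\tfrac{1}{2}\eta^2 \cL_1^2 u^n$ together with a double time integral whose total weight in time is $\eta^3/6$. I then split $\cL_1^2 \cL u = \cL_1^3 u + \eta\, \cL_1^2 \cL_2 u$: the first is a differential polynomial of order three in $u$ whose coefficients involve $\partial^{\alpha} f$ with $|\alpha|\le 3$, while the second is of order four in $u$ with coefficients in $\partial^{\alpha}f$, $|\alpha|\le 4$. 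Both are bounded in $L^{\infty}(B(0, R))$ by $C(\|f\|_{C^4(B(0,R))})\|u(\cdot, \tau)\|_{C^{1,4}(B(0,R))}$, so this contribution is also $O(\eta^3)$ with the required constant. Combining the three pieces and taking the supremum over $x\in B(0,R)$ yields the claimed estimate.

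The main obstacle is precisely the derivative bookkeeping: the whole scheme must be arranged so that no intermediate expression ever requires more than four spatial derivatives of $u$ (or of $f$). The key is to stop the time Taylor expansion at first order, re-expand the $\cL_1^2$ slice by itself, and consistently trade powers of $\eta$ for reductions in the derivative count of the corresponding term. Without this structure, even a second-order Taylor remainder would involve $\cL^3 u$, whose $\cL_2^3$ component alone demands $u\in C^6$.
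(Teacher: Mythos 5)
Your proposal is correct and follows essentially the same route as the paper: both iterate the Duhamel/integral representation of $\partial_t u=\cL u$, re-expand only the $\cL_1^2$ slice to extract $\tfrac12\eta^2\cL_1^2u^n=\tfrac12\eta^2\nabla f\cdot\nabla(\nabla f\cdot\nabla u^n)$, and stop each branch as soon as it carries an explicit $\eta^3$ or a remainder operator of total order at most four, so that only $\|u\|_{C^{1,4}(B(0,R))}$ and $\|f\|_{C^4(B(0,R))}$ are needed. The only difference is cosmetic: you write the remainder with the weight $(t^{n+1}-s)$ via integration by parts, whereas the paper keeps nested double and triple integrals.
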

\begin{proof}
The proof is straightforward using the equivalent integral representation:
\begin{gather}
u(x,t)=u^n(x)+\int_{t^n}^{t}\cL_1u(x,s)+\eta \cL_2u(x,s)\,ds.
\end{gather}
If we directly do semigroup expansion, we will have $\cL^3 u$ terms in the remainder, which require $C^6$ norms of $u$, so we  consider using the integral form to solve this. Our strategy is to put the equivalent integral representation of $u$ into the integral on the right side of the formula above. We then repeat this process until every term in the residue is no less than $\eta^3$. By doing so, we can avoid the $C^6$ norms of $u$. Simple calculation gives
\begin{multline}
u(x,t)=u^n(x)+\int_{t^n}^{t}\cL_1\left(u^n(x)+\int_{t^n}^{s}\cL_1u(x,\tau)+\eta \cL_2u(x,\tau)d\tau\right)ds
\\+\eta\int_{t^n}^{t}\cL_2\left(u^n(x)+\int_{t^n}^{s}\cL_1u(x,\tau)+\eta\cL_2u(x,\tau)d\tau\right)ds
\\=u^n(x)+(t-t^n)(\cL_1+\eta\cL_2)u^n(x)+\int_{t^n}^{t}\int_{t^n}^{s}\cL_1^2u(x,\tau)d\tau ds
+\eta \int_{t^n}^{t}\int_{t^n}^{s}\cL_1\cL_2u(x,\tau)d\tau ds\\+\eta\int_{t^n}^{t}\int_{t^n}^{s}\cL_2\cL_1u(x,\tau)d\tau ds
+\eta^2\int_{t^n}^{t}\int_{t^n}^{s}\cL_2^2u(x,\tau)d\tau ds
\\=u^n(x)+(t-t^n)(\cL_1+\eta\cL_2)u^n(x)+\int_{t^n}^{t}\int_{t^n}^{s}\cL_1^2\left(u^n(x)+\int_{t^n}^{\tau}\cL_1u(x,z)+\eta\cL_2u(x,z)dz\right)d\tau ds
\\+\eta \int_{t^n}^{t}\int_{t^n}^{s}\cL_1\cL_2u(x,\tau)d\tau ds+\eta\int_{t^n}^{t}\int_{t^n}^{s}\cL_2\cL_1u(x,\tau)d\tau ds
+\eta^2\int_{t^n}^{t}\int_{t^n}^{s}\cL_2^2u(x,\tau)d\tau ds
\\=u^n(x)+(t-t^n)(\cL_1+\eta\cL_2)u^n(x)+\frac{1}{2}(t-t^n)^2\cL_1^2u^n(x)+\int_{t^n}^{t}\int_{t^n}^{s}\int_{t^n}^{\tau}\cL_1u(x,z)+\eta\cL_2u(x,z)dzd\tau ds
\\+\eta \int_{t^n}^{t}\int_{t^n}^{s}\cL_1\cL_2u(x,\tau)d\tau ds+\eta\int_{t^n}^{t}\int_{t^n}^{s}\cL_2\cL_1u(x,\tau)d\tau ds
+\eta^2\int_{t^n}^{t}\int_{t^n}^{s}\cL_2^2u(x,\tau)d\tau ds.
\end{multline}
Setting $t$ at $t^{n+1}$, since the initial value $x  \in B(0,R)$, and
\[
|\cL_i^{\alpha}\cL_j^{\beta}u|\le C(\|f\|_{C^{i\alpha+j\beta}(B(0,R))})\|u\|_{C^{1,i\alpha+j\beta}(B(0,R))} 
\]
with $i,j \in \{1,2\}$ and  $i\alpha+j\beta \leq 4$, the claim follows.
\end{proof}


Now all the preparation work has been done. Then, we can prove our main theorem.

\begin{proof}[Proof of Theorem \ref{thm:strongconvex}]

Since $U^n=\mathcal{S}^n\varphi=\mathcal{S}^nu(\cdot, 0)$, we have 
\begin{gather*}
U^n(x)-u(x, n\eta)=\sum_{j=1}^n \mathcal{S}^{n-j}(\mathcal{S}u^{j-1}-u^j)(x).
\end{gather*}
By Lemma \ref{lmm:Scontraction} , we have $\|U^n(x)-u(x, n\eta)\|_{L^{\infty}(B(0, R))}\le \sum_{j=1}^n \|Su^{j-1}-u^j\|_{L^{\infty}(B(0, R))}$.
Now, direct Taylor expansion shows that
\begin{multline*}
(\mathcal{S}u^{j-1})(x)=\mathbb{E}u^{j-1}\left(x-\eta\nabla f(x;\xi)\right)
=u^{j-1}(x)-\eta\nabla f(x)\cdot\nabla u^{j-1}(x)\\
+\frac{1}{2}\eta^2\left(\Sigma+\nabla f(x)\otimes \nabla f(x)\right):\nabla^2u^{j-1}(x)
+\eta^3 R,
\end{multline*}
where $\|R\|_{L^{\infty}(B(0, R))}\le C(\|f(x,\xi)\|_{C^1(0,R)}) \|u^{j-1}\|_{C^3(B(0, R))}$ by Taylor expansion.
Note that $\Sigma+\nabla f(x)\otimes \nabla f(x)=\mathbb{E}\nabla f(x,\xi)\otimes \nabla f(x,\xi)$.

Note that in $B(0, R)$, $\Lambda=\Sigma$.
By  Lemma \ref{lmm:localtruncation}, there exists a constant $C$ depending on $\sup_{\xi}\|f(\cdot, \xi)\|_{C^4}$ such that
\[
\|\mathcal{S}u^{j-1}-u^j\|_{L^{\infty}(B(0, R))} \le C(\|f(\cdot)\|_{C^4(B(0, R))}) \sup_{t\in [t^{j-1}, t^j]}\|u(\cdot, t)\|_{C^{1,4}(B(0, R))} \eta^3.
\]
By Proposition \ref{prop:derivativedecay}, there exists $\beta>0$ such that for $\eta$ is sufficiently small it holds
\[
\sum_{j=1}^n\|\mathcal{S}u^{j-1}-u^j\|_{L^{\infty}(B(0, R))}  \le C(\|f(\cdot)\|_{C^4})  \eta^3 \sum_{j}C_4 e^{-\beta j \eta}
\le C\eta^2.
\]
The claim therefore follows.
\end{proof}

\section{Discussions}\label{sec:conclusion}
In this paper we extended the classical diffusion approximation for SGD from finite time to infinite time, provided that the expected loss function is strongly convex. Here, we perform some illustrating discussion.

\subsection{Significance of the diffusion approximation}

Usual SGD diffusion approximation is on finite time. The extension to a uniform-in-time approximation in this work would allow us to analyze the aysmptotic behavior of SGD using the tools from SDEs with small noise, like those about random perturbed dynamical systems \cite{freidlin2004random,MR2571413}.

As a first example, in \cite{feng2019uniform}, by assuming that each $f(\cdot;\xi)$ is convex, the SGD has been shown to have an invariant measure. Without the convexity assumption on each $f(\cdot, \xi)$ as in our current work, it is not straightforward to study the ergodicity of SGD directly. The uniform $O(\eta^2)$ weak approximation, however, provides a possible way to investigate the long time behavior under these weaker conditions.
In fact,  if the expected loss $f$ is strongly convex within the region where the SGD sees, we have the uniform-in-time approximation and the diffusion approximation SDE can be shown to have an exponential ergodicity due to the convexity of $f$. The uniform-in-time approximation ensures that the distribution of SGD is only $O(\eta^2)$ away from this invariant measure, which then tells us the long time behavior.

For another example, the uniform-in-time diffusion approximation may enable us to investigate the behavior of SGD near local minimizers, by analyzing the large deviation behaviors of  \eqref{eq:modifiedSDE}, which was done in  \cite{hulililiu2018} where the uniform-in-time approximation was taken for granted. Moreover, as in the work of Samuel \cite{smith2020generalization,smith2021origin},  the term $\frac{1}{4}\eta |\nabla f|^2$ may be regarded as the implicit regularizer to the loss landscape. Hence, with the uniform-in-time SDE approximation, we may perform similar study the behavior near the local minimizers and also how the regularizer affects the behaviors near the local minimizers and thus possibly the generalization ability.

\subsection{Discussion on nonconvex case}

For general loss function, given initial value in $B(0, R)$, SGD can hit the boundary of $B(0, 2R)$ in $M\eta^{-1}$ steps, where $M$ depends on the $L^{\infty}$ norms of $f(\cdot, \xi)$ on $B(0, 2R)$. Similarly as in Section \ref{sec:setup}, we can modify the values $\Sigma$ outside $B(0, 2R)$ so that it is a smooth function with compact support. Then, it is possible to show that for $x\in B(0, R)$
\[
\mathbb{E}|X^{J}|^p\le C \exp(\gamma_{I,p} t),
\]
where $\gamma_{I,p}$ depends on the values of $f$ in $B(0, 2R)$. Using this, one can show that
\[
\|\partial^{J}u\|_{L^{\infty}(B(0, R))}\le C\exp(\alpha t).
\]
With similar computation, we find the $O(\eta^2)$ diffusion approximation is valid up to time $T\sim \log(1/\eta)$. Actually, we can observe this by simply replacing $\exp(-\beta j \eta)$ with $\exp(\alpha j \eta)$ in the last step of the proof of theorem \ref{thm:strongconvex}.
\begin{proposition}\label{prop:2}
Assume the initial point of SGD is chosen from $B(0, R)$ for some $R>0$. For any $\varphi\in C^{\infty}$, there exists $\beta>0$, $C>0$ that depends on $\varphi$ and the norms of $f(\cdot,\xi)$ in $B(0, 2R)$ such that 
\begin{gather}
\sup_{n\eta\le \beta\ln(1/\eta)}\sup_{x\in B(0, R)}|U^n(x)-u(x, n\eta)|\le C \eta^2.
\end{gather}
\end{proposition}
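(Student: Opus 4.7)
The plan is to mirror the proof of Theorem \ref{thm:strongconvex} step by step, substituting weaker replacements wherever the strong convexity of $f$ was used. Two ingredients change: (i) the SGD iterates no longer stay in a fixed compact set indefinitely, so Lemma \ref{lmm:Scontraction} is only available on a large but finite window; and (ii) without the contraction $\nabla^2 f \succeq \mu I$, the moments of the tangent flow $X^{(J)}$ grow exponentially rather than decay, producing bounds of the form $|\partial^J u|\le C e^{\alpha t}$ instead of $C e^{-\gamma t}$. Both limitations are tolerable provided we restrict to the window $n\eta \le \beta \log(1/\eta)$.

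First, I would trap SGD on a long initial window. Using only $M_2 := \sup_\xi \sup_{x\in B(0,2R)}|\nabla f(x;\xi)| < \infty$ from Assumption \ref{ass:strongconvex}(i), the SGD increment satisfies $|X_{n+1}-X_n|\le \eta M_2$, so any path with $X_0\in B(0,R)$ stays in $B(0,2R)$ for all $n$ with $n\eta \le R/M_2 =: M$. On this window one can smoothly cut off $\Sigma$ outside $B(0,2R)$ by the obvious analogue of \eqref{eq:defLambda} (with $R$ replaced by $2R$) to obtain $\Lambda$ supported in some $\overline{B(0,R_3)}$. The Stroock--Varadhan argument from Lemma \ref{rmk3} only used the confinement part of Assumption \ref{ass:strongconvex}(i), not convexity, so it still traps the SDE solution almost surely in $\overline{B(0,R_3)}$.

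Next, I would redo Proposition \ref{prop:derivativedecay} without convexity. The It\^o computations in Steps 1 and 2 still produce Gr\"onwall inequalities for $\mathbb{E}|X|^{2m}$ and $\mathbb{E}|X^{(J)}|^p$, but the coefficient $-2m\bar\mu$ in \eqref{210} now has the wrong sign: the best available bound is $+p\gamma_J$, with $\gamma_J$ controlled by $\|\nabla^2 f\|_{L^\infty(B(0,R_3))}$, $\|\nabla\sigma\|_{L^\infty}$, and $\|\nabla^2|\nabla f|^2\|_{L^\infty}$. The induction on $|J|$ still closes via the same Young-inequality absorption because only the sign of the linear rate changed, not the algebraic structure of the polynomial remainders $Q_J, R_J$. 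The outcome is
\[
\mathbb{E}|X^{(J)}(t;x)|^p \le C_{p,J}(1+|x|^{q_J})\,e^{\gamma_{p,J} t},
\]
and plugging into \eqref{eq:DerivativerepresentationGeneralJ} with H\"older yields $\sup_{x\in B(0,R)}|\partial^J u(x,t)|\le C_J\, e^{\alpha_J t}$ for $0<|J|\le 4$.

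Finally, Lemma \ref{lmm:localtruncation} is a purely local Taylor expansion and so applies unchanged, giving per-step truncation error $\|\mathcal{S}u^{j-1}-u^j\|_{L^\infty(B(0,R))}\le C\eta^3 e^{\alpha j\eta}$ with $\alpha := \max_{|J|\le 4}\alpha_J$. The same telescoping argument as in the proof of Theorem \ref{thm:strongconvex}, together with the $L^\infty$-contraction of $\mathcal{S}$ from Lemma \ref{lmm:Scontraction} (which is valid on the window $n\eta\le M$), produces
\[
\|U^n-u(\cdot,n\eta)\|_{L^\infty(B(0,R))}\le \sum_{j=1}^{n} C\eta^3 e^{\alpha j\eta}\le \frac{C}{\alpha}\,\eta^2\, e^{\alpha n\eta}.
\]
Choosing $\beta$ so that $\alpha\beta$ is sufficiently small then converts the time window $n\eta \le \beta\log(1/\eta)$ into the desired bound $C\eta^2$ (the exponential factor being swallowed by the constant for $\eta$ small). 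The main obstacle is the second step: one must carefully verify that the Young-inequality induction on $|J|$ in Proposition \ref{prop:derivativedecay} really does close without the positivity of $\nabla^2 f$, and that the compact support of $\Lambda$ (valid only because we modified $\Sigma$) remains compatible with the a priori control $X\in \overline{B(0,R_3)}$ needed to bound all the coefficients $\partial^\alpha f$, $\partial^\beta|\nabla f|^2$, $\partial^\alpha\sigma$ that enter $Q_J$ and $R_J$.
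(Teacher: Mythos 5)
Your overall route is exactly the one the paper intends (the paper only sketches this proposition in Section~\ref{sec:conclusion}: cut off $\Sigma$ outside $B(0,2R)$, rerun Proposition~\ref{prop:derivativedecay} to get growth $e^{\alpha t}$ in place of decay, and ``replace $e^{-\beta j\eta}$ by $e^{\alpha j\eta}$'' in the last step of Theorem~\ref{thm:strongconvex}). However, your final step contains a genuine error. You correctly arrive at
\[
\|U^n-u(\cdot,n\eta)\|_{L^\infty(B(0,R))}\le \sum_{j=1}^{n} C\eta^3 e^{\alpha j\eta}\le \frac{C}{\alpha}\,\eta^{2}\,e^{\alpha n\eta}\le \frac{C}{\alpha}\,\eta^{2-\alpha\beta}
\]
on the window $n\eta\le\beta\log(1/\eta)$, but your parenthetical claim that the factor $e^{\alpha n\eta}\le\eta^{-\alpha\beta}$ is ``swallowed by the constant for $\eta$ small'' is false: for any fixed $\beta>0$, $\eta^{-\alpha\beta}\to\infty$ as $\eta\to 0$, no matter how small $\alpha\beta$ is. What this argument actually delivers is $O(\eta^{2-\alpha\beta})$, i.e.\ $O(\eta^{2-\delta})$ for any prescribed $\delta>0$ upon taking $\beta=\delta/\alpha$ --- not the stated $C\eta^2$. (This imprecision is present in the paper's own sketch as well, but since you assert the stronger conclusion you need to either shrink the window, e.g.\ to $n\eta\le \beta\log\log(1/\eta)$ or $n\eta\le T$ fixed, or weaken the exponent.)

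There is a second, structural inconsistency in your set-up. You trap the SGD iterates in $B(0,2R)$ only for $n\eta\le R/M_2=:M$, an $\eta$-independent constant, yet the claimed window $n\eta\le\beta\log(1/\eta)$ exceeds $M$ once $\eta$ is small. On the portion of the window beyond $M$ you have justified neither $X_n\in B(0,2R)$ (needed so that $\Lambda=\Sigma$ where the iterates live and so that the Taylor remainder $R$ is controlled by norms of $f(\cdot,\xi)$ on $B(0,2R)$) nor the $L^\infty$-contraction $\|\mathcal{S}^{n-j}g\|_{L^\infty}\le\|g\|_{L^\infty}$ used in the telescoping sum. Meanwhile you \emph{do} invoke the confinement condition of Assumption~\ref{ass:strongconvex}(i) to run the Stroock--Varadhan argument for the SDE. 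The consistent fix is to retain Assumption~\ref{ass:strongconvex}(i) throughout (dropping only the strong convexity (ii)): then Lemma~\ref{lmm:Scontraction} applies verbatim and traps the SGD in $B(0,R)$ for \emph{all} $n$, the contraction of $\mathcal{S}$ holds on the whole window, and the only obstruction to uniformity in time is the exponential growth of $\partial^J u$ --- which is precisely what the logarithmic window is meant to absorb. The rest of your argument (the sign flip in \eqref{210}, the closure of the Young-inequality induction for $Q_J$, $R_J$, and the use of \eqref{eq:DerivativerepresentationGeneralJ}) is sound and matches the paper's intent.
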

In applications, the expected loss functions are generally not strongly convex in the focused region. In particular, the problem of investigating the behavior of SGD near saddle points are important for understanding some special behaviors in SGD \cite{kleinberg2018alternative}.
As mentioned in \cite{hulililiu2018},  Kifer proved that the SDE 
\[
dX=-\nabla f(X)dt+\sqrt{\epsilon}\sigma\, dW
\]
escapes the saddle point of $f$ in $O(\log(\e^{-1}))$ time. Using the diffusion approximation, we expect that SGD escapes the saddle point in a typical steps of order $O(\eta^{-1}|\log \eta|)$, which is the direct result of Proposition \ref{prop:2}. Since the diffusion approximation is valid exactly up to this time regime,  it is not realistic to use diffusion approximation to justify this guess. We leave this problem for future.

\subsection{Extensions and possible future work}
In this paper, we established the uniform-in-time diffusion approximation of SGD for the strongly convex case, but extensions to general non-convex case is really difficult. This is mainly due to the fact that the diffusion coefficient of SGD is usually of $O(\sqrt{\eta})$, where $\eta$ is the footstep or learning rate. In Stochastic Gradient Langevin Dynamics (SGLD), however, the diffusion coefficient is of $O(1)$ \cite{welling2011bayesian}. Hence in the future it is possible to prove the high order uniform-in-time diffusion approximation of SGLD with non-convex potentials.

\section*{Acknowledgement}

This work is financially supported by the National Key R\&D Program of China, Project Number 2021YFA1002800. The work of L. Li was partially supported by Shanghai Municipal Science and Technology Major Project 2021SHZDZX0102, NSFC 11901389 and 12031013, and Shanghai Sailing Program 19YF1421300. We would like to thank Yang Jing for the help of some formula derivation.

\bibliographystyle{unsrt}
\bibliography{main}

\begin{thebibliography}{10}

\bibitem{robbins1951stochastic}
Herbert Robbins and Sutton Monro.
\newblock A stochastic approximation method.
\newblock {\em The annals of mathematical statistics}, pages 400--407, 1951.

\bibitem{ross1988taguchi}
Phillip~J Ross and Phillip~J Ross.
\newblock {\em Taguchi techniques for quality engineering: loss function,
  orthogonal experiments, parameter and tolerance design}.
\newblock Number TS156 R12. McGraw-Hill New York, 1988.

\bibitem{bottou2010large}
L.~Bottou.
\newblock Large-scale machine learning with stochastic gradient descent.
\newblock In {\em Proceedings of COMPSTAT'2010}, pages 177--186. Springer,
  2010.

\bibitem{bubeck2015convex}
S.~Bubeck.
\newblock Convex optimization: Algorithms and complexity.
\newblock {\em Foundations and Trends{\textregistered} in Machine Learning},
  8(3-4):231--357, 2015.

\bibitem{bottou2016optimization}
L.~Bottou, F.~E. Curtis, and J.~Nocedal.
\newblock Optimization methods for large-scale machine learning.
\newblock {\em arXiv preprint arXiv:1606.04838}, 2016.

\bibitem{hulililiu2018}
W.~Hu, C.~J. Li, L.~Li, and J.-G. Liu.
\newblock On the diffusion approximation of nonconvex stochastic gradient
  descent.
\newblock {\em Ann. Math. Sci. Appl.}, 2018.
\newblock arXiv preprint arXiv:1705.07562.

\bibitem{li2019stochastic}
Qianxiao Li, Cheng Tai, and E~Weinan.
\newblock Stochastic modified equations and dynamics of stochastic gradient
  algorithms i: Mathematical foundations.
\newblock {\em The Journal of Machine Learning Research}, 20(1):1474--1520,
  2019.

\bibitem{ankirchner2021approximating}
Stefan Ankirchner and Stefan Perko.
\newblock Approximating stochastic gradient descent with diffusions: error
  expansions and impact of learning rate schedules.
\newblock 2021.

\bibitem{smith2020generalization}
Samuel Smith, Erich Elsen, and Soham De.
\newblock On the generalization benefit of noise in stochastic gradient
  descent.
\newblock In {\em International Conference on Machine Learning}, pages
  9058--9067. PMLR, 2020.

\bibitem{smith2021origin}
Samuel~L Smith, Benoit Dherin, David~GT Barrett, and Soham De.
\newblock On the origin of implicit regularization in stochastic gradient
  descent.
\newblock {\em arXiv preprint arXiv:2101.12176}, 2021.

\bibitem{daniel2016learning}
Christian Daniel, Jonathan Taylor, and Sebastian Nowozin.
\newblock Learning step size controllers for robust neural network training.
\newblock 2016.

\bibitem{zeiler2012adadelta}
Matthew~D Zeiler.
\newblock Adadelta: an adaptive learning rate method.
\newblock {\em arXiv preprint arXiv:1212.5701}, 2012.

\bibitem{dauphin2015equilibrated}
Yann Dauphin, Harm De~Vries, and Yoshua Bengio.
\newblock Equilibrated adaptive learning rates for non-convex optimization.
\newblock In {\em Advances in neural information processing systems}, pages
  1504--1512, 2015.

\bibitem{lin2018don}
Tao Lin, Sebastian~U Stich, Kumar~Kshitij Patel, and Martin Jaggi.
\newblock Don't use large mini-batches, use local {SGD}.
\newblock {\em arXiv preprint arXiv:1808.07217}, 2018.

\bibitem{cao2009neural}
Qing Cao and Mark~E Parry.
\newblock Neural network earnings per share forecasting models: A comparison of
  backward propagation and the genetic algorithm.
\newblock {\em Decision Support Systems}, 47(1):32--41, 2009.

\bibitem{li2016tutorial}
Minchen Li.
\newblock A tutorial on backward propagation through time ({BPTT}) in the gated
  recurrent unit ({GRU}) {RNN}.
\newblock 2016.

\bibitem{durrett1999essentials}
Richard Durrett and R~Durrett.
\newblock {\em Essentials of stochastic processes}, volume~1.
\newblock Springer, 1999.

\bibitem{durrett2019probability}
Rick Durrett.
\newblock {\em Probability: theory and examples}, volume~49.
\newblock Cambridge university press, 2019.

\bibitem{feng2017}
Y.~Feng, L.~Li, and J.-G. Liu.
\newblock Semi-groups of stochastic gradient descent and online principal
  component analysis: properties and diffusion approximations.
\newblock {\em Commun. Math. Sci.}, 16(3), 2018.

\bibitem{litaie2017}
Q.~Li, C.~Tai, and W.~E.
\newblock Stochastic modified equations and adaptive stochastic gradient
  algorithms.
\newblock In {\em International Conference on Machine Learning}, pages
  2101--2110, 2017.

\bibitem{feng2019uniform}
Yuanyuan Feng, Tingran Gao, Lei Li, Jian-Guo Liu, and Yulong Lu.
\newblock Uniform-in-time weak error analysis for stochastic gradient descent
  algorithms via diffusion approximation.
\newblock {\em arXiv preprint arXiv:1902.00635}, 2019.

\bibitem{wu2018sgd}
Lei Wu, Chao Ma, et~al.
\newblock How {SGD} selects the global minima in over-parameterized learning: A
  dynamical stability perspective.
\newblock {\em Advances in Neural Information Processing Systems},
  31:8279--8288, 2018.

\bibitem{wu2021revisiting}
Yixin Wu, Rui Luo, Chen Zhang, Jun Wang, and Yaodong Yang.
\newblock Revisiting the characteristics of stochastic gradient noise and
  dynamics.
\newblock {\em arXiv preprint arXiv:2109.09833}, 2021.

\bibitem{simsekli2019tail}
Umut Simsekli, Levent Sagun, and Mert Gurbuzbalaban.
\newblock A tail-index analysis of stochastic gradient noise in deep neural
  networks.
\newblock In {\em International Conference on Machine Learning}, pages
  5827--5837. PMLR, 2019.

\bibitem{stroock2020support}
Daniel~W Stroock and Srinivasa~RS Varadhan.
\newblock On the support of diffusion processes with applications to the strong
  maximum principle.
\newblock In {\em Contributions to Probability Theory}, pages 333--360.
  University of California Press, 2020.

\bibitem{kunita1997stochastic}
Hiroshi Kunita.
\newblock {\em Stochastic flows and stochastic differential equations},
  volume~24.
\newblock Cambridge university press, 1997.

\bibitem{le1984stochastic}
Yves Le~Jan and Shinzo Watanabe.
\newblock Stochastic flows of diffeomorphisms.
\newblock In {\em North-Holland Mathematical Library}, volume~32, pages
  307--332. Elsevier, 1984.

\bibitem{elworthy1994formulae}
K~David Elworthy and Xue-Mei Li.
\newblock Formulae for the derivatives of heat semigroups.
\newblock {\em Journal of Functional Analysis}, 125(1):252--286, 1994.

\bibitem{freidlin2004random}
Mark Freidlin and Matthias Weber.
\newblock Random perturbations of dynamical systems and diffusion processes
  with conservation laws.
\newblock {\em Probability theory and related fields}, 128(3):441--466, 2004.

\bibitem{MR2571413}
Amir Dembo and Ofer Zeitouni.
\newblock {\em Large deviations techniques and applications}, volume~38 of {\em
  Stochastic Modelling and Applied Probability}.
\newblock Springer-Verlag, Berlin, 2010.
\newblock Corrected reprint of the second (1998) edition.

\bibitem{kleinberg2018alternative}
Robert Kleinberg, Yuanzhi Li, and Yang Yuan.
\newblock An alternative view: When does {SGD} escape local minima?
\newblock {\em arXiv preprint arXiv:1802.06175}, 2018.

\bibitem{welling2011bayesian}
Max Welling and Yee~W Teh.
\newblock Bayesian learning via stochastic gradient {Langevin} dynamics.
\newblock In {\em Proceedings of the 28th international conference on machine
  learning (ICML-11)}, pages 681--688. Citeseer, 2011.

\end{thebibliography}

\end{document}